\pgfplotsset{compat=newest}
\DeclareMathAlphabet{\mathpzc}{OT1}{pzc}{m}{it}
\newcommand{\expect}[1]{\mathbb{E} \left[ #1 \right]}
\newcommand{\Cov}{\mathbb{C}\mathrm{ov}}
\newcommand{\Var}{\mathbb{V}\mathrm{ar}}
\renewcommand{\Pr}{\mathbb{P}}
\newtheorem{proposition}{Proposition}[section]
\newtheorem{remark}{Remark}[section]
\DeclareMathAlphabet{\mathscr}{LS1}{stixscr}{m}{n}
\title{Variance Reduction in Actor Critic Methods (ACM)}
\author[1,2]{Eric Benhamou\thanks{\{eric.benhamou@\}\{dauphine.fr\}/\{aisquareconnect.com\}}}
\affil[1]{A.I Square Connect, Lamsade PSL}
\begin{document}
\maketitle

\begin{abstract}
After presenting Actor Critic Methods (ACM), we show ACM are control variate estimators. Using the projection theorem, we prove that the Q and Advantage Actor Critic (A2C) methods are optimal in the sense of the $L^2$ norm for the control variate estimators spanned by functions conditioned by the current state and action. This straightforward application of Pythagoras theorem provides a theoretical justification of the strong performance of QAC and AAC most often referred to as A2C methods in deep policy gradient methods. This enables us to derive a new formulation for Advantage Actor Critic methods that has lower variance and improves the traditional A2C method. 
\end{abstract}

\medskip

\noindent\textbf{keywords}: Actor critic method, Variance reduction, Projection, Deep RL.

\section{Introduction}
Recently ACM have emerged as the state of the art methods in Deep Reinforcement Learning (DRL) problems, \cite{Jaderberg_2016} or \cite{Espeholt_2018}. The introduction of Deep Reinforcement Learning methods have enabled to enlarge the scope of RL to a wide variety of domains through trial and error learning: atari games \cite{Mnih_2016} Go : \cite{Silver_2016}, image recognition \cite{Zoph_2017}, physics tasks, including classic problems such as cartpole swing-up, dexterous manipulation, legged locomotion and car driving \cite{Lillicrap_2015}, traffic signal control \cite{Mannion_2016a}, electricity generator scheduling \cite{Mannion_2016b}, water resource management \cite{Mason_2016}, controlling robots in real
environments \cite{Levine_2016} and other transport problem \cite{Talpaert_2019} and control and manipulation tasks for robotics \cite{Barth_2018,Horgan_2018}. \\
Advantage Actor Critic (A2C)  and Asynchronous Advantage Actor Critic (A3C) methods \cite{Mnih_2016} have been originally derived from the REINFORCE algorithm \cite{Williams_1992}. \cite{Sutton_1999} has derived in full generality the policy gradient descent method while \cite{Mnih_2016} (respectively \cite{Babaeizadeh_2017}) have introduced the deep learning approach as well as capacity to distribute computations across CPUs (respectively GPUs). Currently, Deep Advantage Actor Critic methods achieve state of the art performance in Deep RL. The efficiency of these techniques has been mostly verified from an experimental point of view with multiple experiments on test-beds environment such Open AI gym and Atari games. However, theoretical considerations have been lacking sofar to express why these methods outperform other DRL methods. In this paper, we try precisely to answer this challenging question and to provide a theoretical justification for the efficiency of Advantage Actor Critic methods. After presenting the key motivation for A2C methods and in particular the concept of baseline, we prove that there exist optimal baselines according to the $L^2$ norm. We explain the concept of variance reduction and relate Q and Advantage Actor Critic methods to conditional expectations that can be interpreted as projection in the $L^2$ norm of the initial logathmic gradient in REINFORCE.

\section{Related Work}
Presenting and comparing ACM has been the subject of multiple papers such as \cite{Grondman_2012}, but also recent papers like \cite{Jaderberg_2016}  \cite{Lillicrap_2015,Zoph_2017,Barth_2018,Horgan_2018} or \cite{Espeholt_2018}. Actor critic are examined from a baseline point of view and the stream of quoted papers above aims at proving mostly the convergence of these methods seen as a enhancement of REINFORCE.

Another variety of research has been to find efficient optimization from a GPU and CPU perspective as in \cite{Mnih_2016}, \cite{Babaeizadeh_2017} and lately \cite{Espeholt_2018}. \cite{Mnih_2016} have implied an asynchronous version of the A2C algorithm, having a set of actors that learns from a set of critics. Computation workload is spread against multiple CPUs allowing efficient computation parallelism.  \cite{Babaeizadeh_2017} have made further progress by spreading computation across multiple GPUs leveraging the fact that the learning process in actor critic method can be split in a few highly parallel tasks that can be efficiently done with GPUs.  \cite{Espeholt_2018} have designed a new algorithm entitled V trace for learners to be able to learn asynchronously from different workers.

However, the core of the idea of Actor Critic method, namely a variance reduction technique has not been very well examined and quite overlooked. This paper aims at precisely showing the real importance of variance reduction in these methods to emphasize the logic behind these methods and provides new techniques. This extends the work of \cite{Greensmith_2002} that only consider one dimensional control variates. We show that we can do multi dimensional control variates. Our work also extends the work of \cite{Schulmanetal_2016} that provides a generalized advantage estimation using trust region optimization procedure for both the policy and the value function, which are represented by neural networks.

\section{Background}
We consider the standard reinforcement learning framework. 
A learning agent interacts with an envi-ronment $\mathcal{E}$ to decide rational or optimal actions and receives in returns rewards. 
These rewards are not necessarily only positive. These rewards act as a feedback for finding the best action. 
Using the established formalism of Markov decision process, we assume that there exists a discrete time stochastic control process represented by a 4-tuple defined by  $(\mathcal{S}$, $\mathcal{A}$, $P_a, R_a)$ where $\mathcal{S}$ 
is the set of states, $\mathcal{A}$ the set of actions, 
$P_a(s, s') = \Pr(s_{t+1}=s' \mid s_t = s, a_t=a)$ the transition probability that action $a$ in state $s$ 
at time $t$ will lead to state $s'$ and finally, $R_a(s,a)$ the immediate reward received after state $s$ and action $a$. \\

The requirement of a Markovian state is not a strong constraint as we can stack observations to enforce that the Markov property is satisfyed.

Following \cite{Mnih_2016} or \cite{Jaderberg_2016}, we introduce the concept of observations and pile them to coin states. In this setting, the agent perceives at time $t$ an observation $o_t$ along with a reward $r_t$. 
The agent decides an action $a_t$. 
The agent's state $s_t$ is a function of its experience until time $t$, $s_t = f(o_1, r_1, a_1, ..., o_t, r_t)$.
This setup guarantees that states are Markovian. 
In practice, we do not keep track of the full history but only a limited set of historical experiences. The $n$-step return $R_{t:t+n}$ at time $t$ is defined as the discounted sum of rewards, $R_{t:t+n} = \sum_{i=1}^{n} \gamma^i r_{t+i}$, where $\gamma \in [0,1)$ is the discounted factor. \\

The value function is the expected return from state $s$, $V^\pi(s) = \expect{R_{t:\infty} | s_t = s, \pi}$, when actions are selected accorded to a policy $\pi(a|s)$. The action-value function $Q^\pi(s,a) = \expect{R_{t:\infty} | s_t = s, a_t = a, \pi}$ is the expected return following action $a$ from state $s$. The goal of the agent is to find a policy $\pi$ that maximizes the value function.

\section{Variance Reduction}
When states and or actions are in high dimensions, we parametrize our policy by parameters denoted $\theta$. Typically these parameters are the ones of the parameters of a deep network (the weight of all layers of our deep network). The intuition of ACM is to leverage a policy gradient descent with a reduced variance. Let us show this precisely. Recall  that the policy gradient is given by the policy logarithmic  gradient weighted by the sum of future discounted rewards (see \cite{Williams_1992} and \cite{Sutton_1999})
\begin{eqnarray}
\nabla_\theta J(\theta)\!\!\!\!  &= &\!\! \!\! \mathbb{E}_{\tau} \Bigg[  \sum_{t=0}^{T-1} \! \nabla_\theta \log{\pi_\theta}(a_t \!  \!\mid s_t) \underbrace{\!\!\left( \sum_{t^{'}=t+1}^{T} \gamma^{t^{'}-t} r_{t^{'}} \right)\!\!}_{R_t}  \Bigg] \hspace{0.2cm} \nonumber \\
&= & \!\! \!\! \mathbb{E}_{\tau} \! \left[  \sum_{t=0}^{T-1} \nabla_\theta \log{\pi_\theta}(a_t \mid s_t) R_t  \right] \label{policy_gradient}
\end{eqnarray}
where $\tau$ represents a trajectory, $R_t$ the sum of future discounted rewards, and $\gamma \in [0,1)$ the discount factor. Hence, equation \eqref{policy_gradient} shows that we update the policy deep network parameters through Monte Carlo updates computed as an expectation. We should stop for a while on this expectation as this is a critical part in the update. If the estimation of this expectation is not very accurate because of a large variance of our estimator provided by the standard empirical mean, we would incur high variability in our gradient update, hence a slow converging gradient policy method. It therefore makes a lot of sense to see if we could find another expression of our policy gradient with lower variance. This approach of finding a modified expression inside the expectation that has the same expected value but a lower variance is referred to as variance reduction \cite{Hammersley_1964}. A typical method variance reduction method is to use control variate(s) (see for instance \cite{ross_2002}). A lower variance in the gradient will produce less noisy gradient and cause less unstable learning leading to a policy distribution skewing to the optimal direction more rapidly. 

\subsection{Control Variate}
To present control variate, let us make thing quite general. 
Suppose we try to estimate a quantity $\mu$ defined as an expectation $\mu=\mathbb{E}[\hat{m}]$ of an estimator $\hat{m}$
 and suppose we know another statistic (or estimator) $\hat{t}$ 
such that we not only know its expectation $\tau=\mathbb{E}[\hat{t}]$, but also its correlation 
with our initial estimator denoted by $ \rho_{\hat{m},\hat{t}}$. We can build an unbiased and better estimator of $\mu$ as follows. We compute the 'control variate' estimator of $\hat{m}$ by subtracting a zero expectation term: 
$\hat{m^{'}} = \hat{m} - \alpha ( \hat{t} - \tau)$ for $\alpha \in \mathbb{R}$. We have the following control variate proposition

\begin{proposition} \label{prop1} Optimal Control Variate - 
The control variate estimator $\hat{m^{'}}$ is unbiased for any value of $\alpha$. Among all possible values of $\alpha$, the optimal one (in the sense that it produces the estimator with minimum variance) is given by
$$
\alpha^{*} = \frac{\Cov(\hat{m}, \hat{t}\,)}{\Var(\hat{t})} = \frac{\sigma_{\hat{m}}}{\sigma_{\hat{t}}}  \rho_{\hat{m},\hat{t}}
$$
The corresponding control variate estimator is given by $\hat{m^{*}} = \hat{m} - \alpha^{*} ( \hat{t} - \tau)$ and has a variance given by
$$
\Var(\hat{m^{*}})=(1- \rho_{\hat{m},\hat{t}}^2) \sigma_{\hat{m}}^2 \leq \sigma_{\hat{m}}^2
$$
Hence the best control variate estimators are obtained for highly positively correlated  ($\rho_{\hat{m},\hat{t}} \approx 1$) or negatively  correlated ($\rho_{\hat{m},\hat{t}} \approx -1$) control variates
\end{proposition}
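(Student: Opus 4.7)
The plan is to treat the three claims — unbiasedness, optimality of $\alpha^*$, and the resulting variance formula — as independent consequences of linearity of expectation and bilinearity of covariance. The whole argument reduces to a one-variable quadratic minimization in $\alpha$, so no projection-theoretic machinery is needed at this stage; the slicker Hilbert-space viewpoint is saved for the later Q/A2C sections.

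First I would verify unbiasedness by direct computation. Since $\tau = \mathbb{E}[\hat{t}\,]$ by definition, we have $\mathbb{E}[\hat{t} - \tau] = 0$, and linearity of expectation gives $\mathbb{E}[\hat{m}'] = \mathbb{E}[\hat{m}] - \alpha\, \mathbb{E}[\hat{t} - \tau] = \mu$ for every $\alpha \in \mathbb{R}$. Next, I would expand the variance using $\Var(X - Y) = \Var(X) - 2\Cov(X,Y) + \Var(Y)$ to obtain
\[
\Var(\hat{m}') = \sigma_{\hat{m}}^2 - 2\alpha\, \Cov(\hat{m}, \hat{t}\,) + \alpha^{2} \sigma_{\hat{t}}^{2}.
\]
This is a convex quadratic in $\alpha$ with leading coefficient $\sigma_{\hat{t}}^{2} > 0$ (assuming $\hat{t}$ is non-degenerate), so the unique minimizer is obtained by setting the derivative to zero, yielding $\alpha^{*} = \Cov(\hat{m}, \hat{t}\,) / \Var(\hat{t}\,)$. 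Rewriting $\Cov(\hat{m}, \hat{t}\,) = \sigma_{\hat{m}} \sigma_{\hat{t}}\, \rho_{\hat{m},\hat{t}}$ then recovers the second form stated in the proposition.

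Substituting $\alpha^{*}$ back into the quadratic and simplifying gives
\[
\Var(\hat{m}^{*}) = \sigma_{\hat{m}}^{2} - \frac{\Cov(\hat{m}, \hat{t}\,)^{2}}{\sigma_{\hat{t}}^{2}} = (1 - \rho_{\hat{m},\hat{t}}^{2})\, \sigma_{\hat{m}}^{2},
\]
which is bounded above by $\sigma_{\hat{m}}^{2}$ and coincides with it only when $\rho_{\hat{m},\hat{t}} = 0$. There is no genuine obstacle in this proof: the entire argument is mechanical once the variance is written as a function of $\alpha$. The only subtlety worth flagging explicitly is the implicit non-degeneracy assumption $\sigma_{\hat{t}}^{2} > 0$, without which $\alpha^{*}$ is ill-defined and the control variate is trivially useless. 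The qualitative conclusion that the largest variance reduction occurs when $\rho_{\hat{m},\hat{t}}$ is close to $\pm 1$ then follows immediately by inspecting the factor $(1 - \rho_{\hat{m},\hat{t}}^{2})$.
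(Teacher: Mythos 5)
Your proof is correct and follows essentially the same route as the paper's: unbiasedness by linearity of expectation, expansion of the variance as a convex quadratic in $\alpha$, minimization by first-order condition, and back-substitution to obtain $(1-\rho_{\hat{m},\hat{t}}^2)\sigma_{\hat{m}}^2$. Your explicit flagging of the non-degeneracy condition $\sigma_{\hat{t}}^2>0$ is a small improvement over the paper's proof, which leaves it implicit.
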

\begin{proof}
Refer to Appendix section \ref{proof1}
\end{proof}

Intuitively, the more correlated (positively or negatively) the estimators $\hat{m}$ and $\hat{t}$, the better we can exploit the knowledge of our control variate zero expectation estimator $ \hat{t} - \tau$ to reduce the variance of our initial estimator $\hat{m}$. If we stop for a minute, this is quite trivial. For our control variates, we know the true expectation. If by any chance our estimator is very correlated (positively or negatively) to our control variates, we can exploit this knowledge to correct our estimator. 

As a matter of fact, through control variate, we exploit information about the errors in estimates of known quantities ($ \hat{t} - \tau$) to reduce the error of an estimate of an unknown quantity ($\hat{m}$). We can also analyze the optimal control variate weight $\alpha^{*} = \frac{\Cov(\hat{m}, \hat{t}\,)}{\Var(\hat{t})}$ as a regression coefficient of our unknown estimator $\hat{m}$ over the space of linear combination of the zero expectation estimator $ \hat{t} - \tau$. Hence control variate consists in just focusing on the orthogonal part of our unknown estimator $\hat{m}$ against the space of linear combination of the zero expectation estimator $ \hat{t} - \tau$. If by any chance our unknown estimator is highly correlated to the control variate, this orthogonal part is close to zero and we achieve a much lower variance estimator.

In practice, in deep RL, we know neither the correlation between the two estimators $\rho_{\hat{m},\hat{t}}$ nor the variances of our two estimators: $ \sigma_{\hat{m}}$ or $\sigma_{\hat{t}}$. Hence, instead of being able to define strictly a control variate estimator, we create a pseudo control variate estimator given by 
\begin{equation}\label{control_variate}
\hat{m} - \hat{t}
\end{equation}
provided $\mathbb{E}[\hat{t}] = 0$. The latter formulation takes a control variate coefficient of $\alpha=1$, which implicitly assumes that $\Cov(\hat{m}, \hat{t}\,) \approx \Var(\hat{t})$. We will use this setting to interpret various Actor Critic (AD) methods as control variate estimators for standard AC method for policy gradients. This is summarized by the proposition below:

\begin{proposition} \label{prop2} Actor Critic Methods - The following estimators of the policy gradient are unbiased and can be analyzed as control variate estimators of REINFORCE policy gradient  estimator given by $\mathbb{E}_{\pi_{\theta}} \! \left[  \sum_{t=0}^{T-1} \nabla_\theta \log{\pi_\theta}(s,a) R_t  \right] $:
\begin{itemize}
\item $\mathbb{E}_{\pi_{\theta}} \! \left[  \displaystyle \sum_{t=0}^{T-1} \nabla_\theta \log{\pi_\theta}(s,a) Q(s,a) \right]$ \qquad (Q-AC)
\item $\mathbb{E}_{\pi_{\theta}} \! \left[  \displaystyle \sum_{t=0}^{T-1} \nabla_\theta \log{\pi_\theta}(s,a) A(s,a) \right]$ \qquad (A-AC)
\item $\mathbb{E}_{\pi_{\theta}} \! \left[ \displaystyle  \sum_{t=0}^{T-1} \nabla_\theta \log{\pi_\theta}(s,a) TD(s) \right]$ \qquad  (TD-AC)
\end{itemize}
where the Advantage function (A)  is defined as 
$$
A(s,a) = \mathbb{E}_{\pi_{\theta}}[ r_{t+1} + \gamma V(s_{t+1}) \mid s_{t}=s, a_{t} = a  ] - V(s_{t})
$$ 
and the Temporal Difference  function (TD) as 
$$
TD(s_{t})= r_{t+1} + \gamma V(s_{t+1}) - V(s_{t})
$$
\end{proposition}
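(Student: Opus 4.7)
The plan is to reduce everything to two elementary facts: (i) the ``baseline identity'' $\mathbb{E}_{a\sim\pi_\theta(\cdot\mid s)}[\nabla_\theta\log\pi_\theta(a\mid s)]=0$ for every fixed state $s$, which follows from differentiating $\sum_a\pi_\theta(a\mid s)=1$ under the sum, and (ii) the tower property of conditional expectations combined with the defining relations $Q^\pi(s,a)=\mathbb{E}[R_t\mid s_t=s,a_t=a]$ and $V^\pi(s)=\mathbb{E}[r_{t+1}+\gamma V^\pi(s_{t+1})\mid s_t=s]$. The strategy is to write each proposed estimator as the REINFORCE estimator minus an auxiliary quantity, and then to show that this auxiliary quantity has zero expectation, so that the pseudo control variate form $\hat{m}-\hat{t}$ of equation \eqref{control_variate} applies.

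First I would handle Q-AC. Define the auxiliary term $\hat t_Q=\sum_{t=0}^{T-1}\nabla_\theta\log\pi_\theta(a_t\mid s_t)\bigl(R_t-Q(s_t,a_t)\bigr)$. Conditioning on $(s_t,a_t)$ and using the definition of $Q$ gives $\mathbb{E}[R_t-Q(s_t,a_t)\mid s_t,a_t]=0$; since $\nabla_\theta\log\pi_\theta(a_t\mid s_t)$ is measurable with respect to $(s_t,a_t)$, the tower property yields $\mathbb{E}[\hat t_Q]=0$. Writing $\text{Q-AC}=\text{REINFORCE}-\hat t_Q$ identifies Q-AC as a pseudo control variate estimator and proves that it is unbiased.

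Next I would reduce A-AC to Q-AC by a baseline argument. Because $A(s,a)=Q(s,a)-V(s)$, the A-AC estimator is obtained from the Q-AC estimator by subtracting $\hat t_V=\sum_t \nabla_\theta\log\pi_\theta(a_t\mid s_t) V(s_t)$. Here $V(s_t)$ depends only on $s_t$, so conditioning on $s_t$ and invoking the baseline identity gives $\mathbb{E}[\nabla_\theta\log\pi_\theta(a_t\mid s_t) V(s_t)\mid s_t]=V(s_t)\cdot 0=0$; summing over $t$ and taking expectation yields $\mathbb{E}[\hat t_V]=0$. Thus A-AC is REINFORCE minus the sum of two zero-mean control variates and is again unbiased. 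For TD-AC I would use the one-step Bellman relation: conditioning on $(s_t,a_t)$ gives $\mathbb{E}[r_{t+1}+\gamma V(s_{t+1})\mid s_t,a_t]=Q(s_t,a_t)$, so $\mathbb{E}[TD(s_t)\mid s_t,a_t]=A(s_t,a_t)$. Hence $\mathbb{E}\bigl[\nabla_\theta\log\pi_\theta(a_t\mid s_t)\bigl(TD(s_t)-A(s_t,a_t)\bigr)\bigr]=0$ by the tower property, which puts TD-AC in the same unbiased control variate framework as A-AC.

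I do not expect a serious obstacle; the whole argument is really two applications of the tower property plus the baseline identity. The only subtle point is notational: the symbol $TD(s_t)$ suggests that $TD$ depends only on the state, but the realized quantity $r_{t+1}+\gamma V(s_{t+1})-V(s_t)$ depends on the sampled transition and therefore implicitly on $a_t$. Making explicit that the \emph{conditional} expectation of $TD(s_t)$ given $(s_t,a_t)$ equals $A(s_t,a_t)$ is the step that has to be handled carefully to justify calling TD-AC a control variate estimator of REINFORCE rather than merely of A-AC.
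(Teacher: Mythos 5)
Your proof is correct and follows essentially the same route as the paper: each estimator is written as REINFORCE minus an auxiliary term whose expectation vanishes by the tower property (for the $Q$ and TD terms) and by the score-function/baseline identity for the $V(s)$ term, which the paper delegates to Proposition~\ref{prop5} and Remark~\ref{remark5}. The only divergence is in TD-AC, where you condition on $(s_t,a_t)$ and use the Bellman relation to reduce to A-AC, whereas the paper conditions on $s_{t+1}$ and compares the reward sums directly; your choice of conditioning is arguably the cleaner one, since the score term $\nabla_\theta \log \pi_\theta(a_t \mid s_t)$ is measurable with respect to $(s_t,a_t)$.
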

\begin{proof}
Refer to Appendix section \ref{proof2}
\end{proof}

Traditionally, methods for solving Reinforcement Learning (RL) are either categorized as policy method if they aim to find the optimal policy $\pi^{\star}$ or as value methods if they aim to find the optimal 'Q' function. Somehow, Q-AC and AAC methods aim to find the optimal policy thanks to gradient ascent computation but use in their gradient ascent term a 'Q' function, making these methods a mix between policy and value methods. This is summarized by figure \ref{figure1}.

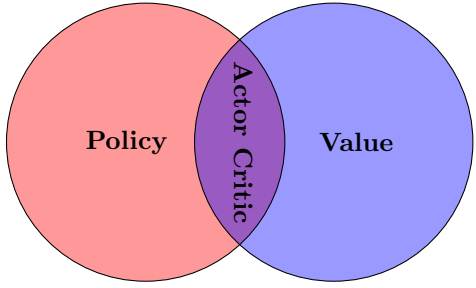
\begin{figure}
\begin{center}
\begin{tikzpicture}
	\begin{scope}[fill opacity=0.4]
      		\fill[red] (0,0) circle (1.85cm);
		\fill[blue] (0:2.5cm) circle (1.85cm);
      		\draw (0,0) circle (1.85cm);
      		\draw (0:2.5cm) circle (1.85cm);
   	\end{scope}
	\draw (-0.25cm,0cm) node { \textbf{Policy}};
	\draw (0cm:2.8cm) node { \textbf{Value}};
	\draw (0cm:1.3cm) node[rotate=-90] { \textbf{Actor Critic}};
\end{tikzpicture}
\end{center}
\caption{The different RL methods. ACM combine policy and value computations, making them mixed methods}\label{figure1}
\end{figure}

\subsection{Conditional expectation, projection and optimality}
Sofar, we have revisited AC methods as control variates. There is however a strong connection with conditional expectation and projection. Let us make the link. But let us first recall a basic property of conditional expectation that states that the conditional expectation with respect to a sub $\sigma$-algebra $\mathcal{G} included in  \mathcal{F}$  of a stochastic variable $X$, $\mathcal{L}^2$ measurable on a probability space $(\Omega, \mathcal{F}, \mathcal{P})$, is the best prediction of the sub-space spanned by this sub $\sigma$-algebra:

\begin{proposition}  \label{prop3}  Conditional expectation and Pythagoras - 
Let $(\Omega, \mathcal{F}, \mathcal{P})$ be a probability space, $X:\Omega \to \mathbb{R}^n$ a random variable on that probability space square integrable and $\mathcal{G} \subseteq \mathcal{F}$ is a sub $\sigma$-algebra of $\mathcal{F}$, then we have that 
\begin{itemize}
\item $X - \mathbb{E}[ X \mid \mathcal{G}]$ is orthogonal to any element $Y$ of $\mathcal{L}^2(\Omega, \mathcal{G}, \mathcal{P})$ where $\mathcal{L}^2(\Omega, \mathcal{G}, \mathcal{P})$ is the space of random variable on the sub  $\sigma$-algebra $(\Omega, \mathcal{G}, \mathcal{P})$ that are square integrable.
\item $\mathbb{E}[ X \mid \mathcal{G}]$  is the best prediction in the sense that $\mathbb{E}[ X \mid \mathcal{G}]$ minimizes its variance with $X$: $\mathbb{E}[ (X - Y)^2]$ among any element $Y \in \mathcal{L}^2(\Omega, \mathcal{G}, \mathcal{P})$.
\end{itemize}
\end{proposition}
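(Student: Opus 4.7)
The plan is to establish the two bullet points as a standard $L^2$ Hilbert-space decomposition, with the conditional expectation playing the role of the orthogonal projection onto $L^2(\Omega,\mathcal{G},\mathcal{P})$. I will first verify that $\mathbb{E}[X\mid\mathcal{G}]$ itself belongs to $L^2(\Omega,\mathcal{G},\mathcal{P})$, which follows from the conditional Jensen inequality applied to the convex function $u\mapsto u^2$: this gives $\mathbb{E}[X\mid\mathcal{G}]^2 \le \mathbb{E}[X^2\mid\mathcal{G}]$, and then tower yields $\mathbb{E}[\mathbb{E}[X\mid\mathcal{G}]^2] \le \mathbb{E}[X^2] < \infty$. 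With both $X$ and $\mathbb{E}[X\mid\mathcal{G}]$ in $L^2$, Cauchy--Schwarz ensures that all inner products encountered below are well defined.

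For the first bullet (orthogonality), I would start from the defining property of the conditional expectation, namely $\mathbb{E}[X\mathbf{1}_A]=\mathbb{E}[\mathbb{E}[X\mid\mathcal{G}]\mathbf{1}_A]$ for every $A\in\mathcal{G}$. By linearity this extends to simple $\mathcal{G}$-measurable functions, and by the standard monotone class / dominated convergence argument it extends to every $Y\in L^2(\Omega,\mathcal{G},\mathcal{P})$. Subtracting yields $\mathbb{E}[(X-\mathbb{E}[X\mid\mathcal{G}])Y]=0$, which is exactly the orthogonality of $X-\mathbb{E}[X\mid\mathcal{G}]$ to $L^2(\Omega,\mathcal{G},\mathcal{P})$ equipped with the $L^2$ inner product.

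For the second bullet (optimality), I would use the Pythagoras identity. For an arbitrary $Y\in L^2(\Omega,\mathcal{G},\mathcal{P})$, write
\begin{equation*}
X-Y=\bigl(X-\mathbb{E}[X\mid\mathcal{G}]\bigr)+\bigl(\mathbb{E}[X\mid\mathcal{G}]-Y\bigr),
\end{equation*}
square and take expectations. Since $\mathbb{E}[X\mid\mathcal{G}]-Y\in L^2(\Omega,\mathcal{G},\mathcal{P})$, the cross term vanishes by the orthogonality just established, leaving
\begin{equation*}
\mathbb{E}\bigl[(X-Y)^2\bigr]=\mathbb{E}\bigl[(X-\mathbb{E}[X\mid\mathcal{G}])^2\bigr]+\mathbb{E}\bigl[(\mathbb{E}[X\mid\mathcal{G}]-Y)^2\bigr].
\end{equation*}
The second summand is non-negative and vanishes iff $Y=\mathbb{E}[X\mid\mathcal{G}]$ almost surely, so $\mathbb{E}[X\mid\mathcal{G}]$ is the unique minimizer of $Y\mapsto \mathbb{E}[(X-Y)^2]$ in $L^2(\Omega,\mathcal{G},\mathcal{P})$.

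There is no real obstacle here: the result is classical and essentially amounts to identifying conditional expectation with the orthogonal projection onto a closed subspace of the Hilbert space $L^2$. The only point requiring mild care is the justification that the orthogonality property (established first for indicators of $\mathcal{G}$-sets) extends to all of $L^2(\Omega,\mathcal{G},\mathcal{P})$; this is a standard density argument (simple functions are dense in $L^2$), combined with the Cauchy--Schwarz bound to pass to the limit inside the expectation.
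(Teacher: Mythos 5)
Your proof is correct and follows essentially the same route as the paper's: first establish that $X-\mathbb{E}[X\mid\mathcal{G}]$ is orthogonal to every $Y\in\mathcal{L}^2(\Omega,\mathcal{G},\mathcal{P})$, then conclude by expanding $\mathbb{E}[(X-Y)^2]$ around $\mathbb{E}[X\mid\mathcal{G}]$ and dropping the vanishing cross term. The only difference lies in the orthogonality step, where the paper applies the tower property and pulls the $\mathcal{G}$-measurable factor $Y$ inside the conditional expectation, whereas you derive it from the defining identity on indicators of $\mathcal{G}$-sets plus a density argument; both are valid, and your preliminary verification via conditional Jensen that $\mathbb{E}[X\mid\mathcal{G}]\in L^2$ is a point of rigor the paper omits.
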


\begin{proof}
Refer to Appendix section \ref{proof3}
\end{proof}

Combining proposition \ref{prop1}, \ref{prop2} and \ref{prop3} enables us comparing the various ACM from a variance point of view. It is worth looking at the intuition of the function involved in the different ACM. TD-AC relies on the Temporal Difference which is the projection of the cumulated discounted rewards on the sub $\sigma$-algebra generated by the knowledge of the state $s_{t+1}$ while Q-AC relies on the action value 'Q' function, which is its projection on the sub $\sigma$-algebra generated by the knowledge of the state $s_{t}, a{t}$ and the A-AC on the advantage function which is the difference between the action value 'Q' function and the value function which is of the cumulated discounted rewards on the sub $\sigma$-algebra generated by the knowledge of the state $s_{t}$. Intuitively, as the various sub $\sigma$ algebras are bigger (in the sense that each one is contained by the next one), the corresponding AC methods should become more effective, meaning TD-AC should be improved by Q-AC that should in turn be improved by the A-AC method. This is the subject of proposition \ref{prop4}

\begin{proposition}  \label{prop4}  AC methods Comparison - 
From a variance point of view, TD-AC is less efficient than A-AC, and REINFORCE is less efficient than Q-AC.
\end{proposition}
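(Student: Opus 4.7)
The plan is to exploit the fact that, in each of the two comparisons, the weighting factor of the ``smaller'' estimator is obtained as the conditional expectation of the weighting factor of the ``larger'' estimator with respect to an appropriate sub-$\sigma$-algebra. Once this identification is in place, the inequality is an immediate consequence of Proposition \ref{prop3}, or equivalently of the law of total variance $\Var(X)=\expect{\Var(X\mid\mathcal{G})}+\Var(\expect{X\mid\mathcal{G}})$ which gives $\Var(\expect{X\mid\mathcal{G}})\leq\Var(X)$.

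First I would work per time step $t$ and denote $\mathcal{G}_t=\sigma(s_t,a_t)$. The score function $\nabla_\theta\log\pi_\theta(a_t\mid s_t)$ is $\mathcal{G}_t$-measurable, so it can be pulled inside any conditional expectation against $\mathcal{G}_t$. For the comparison REINFORCE vs.\ Q-AC, I would use the definition $Q(s_t,a_t)=\expect{R_t\mid s_t,a_t}$, which by the pull-out property gives
\begin{equation*}
\nabla_\theta\log\pi_\theta(a_t\mid s_t)\,Q(s_t,a_t)=\expect{\nabla_\theta\log\pi_\theta(a_t\mid s_t)\,R_t\,\big|\,\mathcal{G}_t}.
\end{equation*}
Applying Proposition \ref{prop3} to $X=\nabla_\theta\log\pi_\theta(a_t\mid s_t)R_t$ with sub-$\sigma$-algebra $\mathcal{G}_t$ yields the per-step inequality $\Var(\nabla_\theta\log\pi_\theta\,Q)\leq\Var(\nabla_\theta\log\pi_\theta\,R_t)$. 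For the TD-AC vs.\ A-AC comparison, I would similarly invoke the Markov property to write $A(s_t,a_t)=\expect{r_{t+1}+\gamma V(s_{t+1})\mid s_t,a_t}-V(s_t)=\expect{TD(s_t)\mid s_t,a_t}$, and again conclude by Proposition \ref{prop3} that A-AC dominates TD-AC at each time step.

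Finally I would aggregate over $t$. The cleanest way is to introduce the global $\sigma$-algebra $\mathcal{G}=\sigma\bigl((s_t,a_t)_{0\leq t\leq T}\bigr)$ and note that, by the tower property and the fact that each score $\nabla_\theta\log\pi_\theta(a_t\mid s_t)$ is $\mathcal{G}$-measurable, the full Q-AC sum equals $\expect{\sum_t\nabla_\theta\log\pi_\theta(a_t\mid s_t)R_t\mid\mathcal{G}}$, and similarly the A-AC sum equals $\expect{\sum_t\nabla_\theta\log\pi_\theta(a_t\mid s_t)TD(s_t)\mid\mathcal{G}}$. Then one application of Proposition \ref{prop3} to the full summed estimator delivers the inequality on the variance of the entire policy-gradient estimator, not just per step.

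The main obstacle I anticipate is precisely this last aggregation step: na\"{\i}vely summing per-step variance inequalities fails because of the cross-covariance terms between different time steps. The trick of conditioning on the full state-action trajectory $\mathcal{G}$ handles this uniformly, but it requires being careful that $\expect{R_t\mid\mathcal{G}}=Q(s_t,a_t)$ (which holds by the Markov property as soon as rewards are integrated out) and that $\expect{TD(s_t)\mid\mathcal{G}}=A(s_t,a_t)$. Modulo this bookkeeping, the argument is a direct corollary of Pythagoras in $L^2$.
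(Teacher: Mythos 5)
Your per-step argument is essentially the paper's own approach --- identify the Q-AC and A-AC weights as conditional expectations of the REINFORCE and TD-AC weights with respect to $\sigma(s_t,a_t)$, pull the $\mathcal{G}_t$-measurable score inside, and invoke Proposition \ref{prop3} --- and at the per-step level it is actually \emph{cleaner} than the paper's proof: your identity $A(s_t,a_t)=\mathbb{E}[TD(s_t)\mid s_t,a_t]$ compares TD-AC and A-AC directly, whereas the paper argues via a containment of $\sigma$-algebras ($\sigma(s_{t+1})$ inside $\sigma(s_t,a_t)$) that is not literally true in a stochastic environment. The paper, for what it is worth, never addresses the aggregation over $t$ at all; its proof stops at the per-step heuristic.

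However, the aggregation step you add is genuinely flawed, and since you stake the full-estimator claim on it, this is a real gap. If $\mathcal{G}=\sigma\bigl((s_u,a_u)_{0\leq u\leq T}\bigr)$ is the $\sigma$-algebra of the \emph{entire} state-action trajectory, then $\mathbb{E}[R_t\mid\mathcal{G}]$ is not $Q(s_t,a_t)$: the return $R_t=\sum_{t'>t}\gamma^{t'-t}r_{t'}$ is (up to reward noise) determined by the future state-action pairs, all of which are $\mathcal{G}$-measurable, so conditioning on $\mathcal{G}$ returns essentially $R_t$ itself rather than averaging the future out. The function $Q(s_t,a_t)$ is obtained by integrating over the future trajectory, which is exactly what conditioning on $\mathcal{G}$ refuses to do; "integrating out the rewards" does not rescue this, because $Q$ also integrates out the future states and actions. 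The correct conditioning that yields $Q(s_t,a_t)$ is on the \emph{past}, $\mathcal{F}_t=\sigma(s_0,a_0,\ldots,s_t,a_t)$, but then each summand is projected onto a different $\sigma$-algebra and Proposition \ref{prop3} cannot be applied once to the whole sum --- which is precisely the cross-covariance obstruction you correctly identified and then did not actually overcome. A full-sum variance comparison requires a separate argument (e.g.\ a martingale-difference decomposition of the two estimators with respect to the filtration $(\mathcal{F}_t)$, comparing the conditional variances term by term); neither your trick nor the paper's proof supplies one.
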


\begin{proof}
Refer to Appendix section \ref{proof4}
\end{proof}

\section{Towards new AC methods}
In order to create new AC methods, it is useful to notice a property of the policy gradient computation that provides additional control variates.

\begin{proposition}\label{prop5}
If a function $\Phi(s,a )$ is such that when integrating with respect to the policy $\pi_\theta(s,a)$, it does not depend on the parameter $\theta$, which means that 
$\nabla_{\theta} \int  \pi_\theta(s,a) \Phi(s,a)  = 0$, then its gradient policy term is null:
\begin{equation}\label{equality_lemma}
\mathbb{E}_{\pi_{\theta}} \! \left[  \displaystyle \sum_{t=0}^{T-1} \nabla_\theta \log{\pi_\theta}(s,a) \Phi(s,a ) \right] = 0
\end{equation}
\end{proposition}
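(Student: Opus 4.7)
The plan is to apply the classical log-derivative (score function) trick together with a swap of gradient and integral, then invoke the hypothesis directly. The starting observation is the identity
\[
\pi_\theta(s,a)\,\nabla_\theta \log \pi_\theta(s,a) \;=\; \nabla_\theta \pi_\theta(s,a),
\]
valid wherever $\pi_\theta(s,a)>0$, which is the workhorse behind REINFORCE and almost every manipulation done in this paper.

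First, by linearity of expectation, I would pull the sum over $t$ outside the expectation and treat a single generic term $\mathbb{E}_{\pi_\theta}[\nabla_\theta \log \pi_\theta(s,a)\,\Phi(s,a)]$. Writing this expectation explicitly as an integral against $\pi_\theta(s,a)$ (in the joint state-action sense used in the proposition's statement) and substituting the score-function identity, the $\pi_\theta$ in the measure cancels with the $\pi_\theta$ in the denominator of $\nabla_\theta \log \pi_\theta$, leaving
\[
\int \nabla_\theta \pi_\theta(s,a)\,\Phi(s,a)\,ds\,da.
\]
Since $\Phi$ does not depend on $\theta$, under the usual mild regularity assumptions (dominated convergence or equivalently interchanging $\nabla_\theta$ with $\int$), this integral equals
\[
\nabla_\theta \!\int \pi_\theta(s,a)\,\Phi(s,a)\,ds\,da,
\]
which vanishes by the hypothesis of the proposition. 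Summing over $t$ gives zero, as claimed in \eqref{equality_lemma}.

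The main technical subtlety, and what I would flag but not belabor, is the swap of $\nabla_\theta$ and $\int$: it requires that $\pi_\theta$ be differentiable in $\theta$ with a uniformly integrable dominating function for $\nabla_\theta \pi_\theta \cdot \Phi$. This holds in every practical parametrization used in deep RL (softmax, Gaussian policies, etc.) and is already implicit in the derivation of the policy gradient expression \eqref{policy_gradient}, so invoking it here is consistent. Beyond that, the argument is a one-line cancellation; the content of the proposition is really the useful corollary that \emph{any} $\Phi$ whose $\pi_\theta$-average is $\theta$-independent yields a valid zero-mean control variate term, which is precisely what is needed in the next section to construct new A2C-style estimators.
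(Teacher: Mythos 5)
Your proof is correct and follows essentially the same route as the paper's: apply the score-function identity $\nabla_\theta \log \pi_\theta = \nabla_\theta \pi_\theta / \pi_\theta$, cancel the density in the expectation integral, interchange $\nabla_\theta$ with the integral, and invoke the hypothesis. If anything, your version is slightly cleaner, since you integrate over the state-action variables $ds\,da$ where the paper writes $d\theta$, and you explicitly flag the regularity needed for the interchange, which the paper only mentions in passing.
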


\begin{proof}
Refer to Appendix section \ref{proof5}
\end{proof}

\begin{remark}\label{remark5}
In particular if the function $\Phi(s,a )$  writes as a function of $s$  only: $\Phi(s,a )=\Psi(s)$ and if $\Psi(s)$ is stationary in the sense that 
$\int  \pi_\theta(s,a) \Psi(s)  d\theta = \Psi(s)$, than its policy gradient term is equal to zero: 
\begin{equation}
\mathbb{E}_{\pi_{\theta}} \! \left[  \displaystyle \sum_{t=0}^{T-1} \nabla_\theta \log{\pi_\theta}(s,a) \Psi(s) \right] = 0
\end{equation}
Typical stationary functions are the state value function $V(s)$ and the constant function: $1$
\end{remark}

Equipped with these additional control variates, it is useful to examine multi dimensional control variates estimators that is the subject of the following proposition

\begin{proposition}  \label{prop6}  Multi Dimensional Control Variates Estimators - 
Let us have a collection of random variables $\hat{t}_1, \ldots, \hat{t}_d$ for which we know the expectation $\tau_i =\mathbb{E}[\hat{t}_i]$ (for any $i=1, \dots, d$). Let us denote by 
$$
T =  (\hat{t}_1 - \mathbb{E}[\hat{t}_1], \ldots, \hat{t}_d - \mathbb{E}[\hat{t}_d])^T
$$
the vector of control variates, $\lambda$ a $d$-dimensional real vector and build the multi dimensional control variates estimator as follows:
\begin{equation}\label{multiDimCV_def}
\hat{m^{'}}  = \hat{m} - \lambda^T T
\end{equation}
As in the one dimensional case, the control variate estimator $\hat{m^{'}}$ is unbiased for any value 
of $\lambda \in \mathbf{R}^d$. Assuming that  $\mathbb{E}[ T\, T^T ]$ is non singular, among all the possible values of $\lambda$, the optimal one (in the sense that it produces the estimator with minimum variance) is given by
\begin{equation}\label{optimal_lambda}
\lambda^{*} = \mathbb{E}[T \,T^T] ^{-1} \mathbb{E}[\hat{m}  T]
\end{equation}
The corresponding control variate estimator is given by $\hat{m^{*}} = \hat{m} - (\lambda^\star)^T T$  and has a variance given by
\begin{equation}\label{optimal_multi_variance}
\Var(\hat{m^{*}})= \Var(\hat{m}) - \mathbb{E}[\hat{m}  T]^T  \mathbb{E}[T \,T^T] ^{-1}  \mathbb{E}[\hat{m}  T]
\end{equation}
\end{proposition}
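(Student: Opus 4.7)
The plan is to mirror the proof of Proposition \ref{prop1} but in matrix/vector form. First I would handle unbiasedness: by construction, each component of $T$ satisfies $\expect{\hat{t}_i - \expect{\hat{t}_i}} = 0$, so $\expect{T} = 0 \in \mathbb{R}^d$. Linearity of expectation then gives $\expect{\hat{m^{'}}} = \expect{\hat{m}} - \lambda^T \expect{T} = \mu$ for every $\lambda \in \mathbb{R}^d$, which does not use the non-singularity hypothesis.

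Next I would express the variance of $\hat{m^{'}}$ as a quadratic form in $\lambda$. Because $\expect{T} = 0$, one has $\Cov(\hat{m}, T) = \expect{\hat{m} T}$ and $\Var(T) = \expect{T T^T}$, whence
\begin{equation*}
\Var(\hat{m^{'}}) = \Var(\hat{m}) - 2\lambda^T \expect{\hat{m} T} + \lambda^T \expect{T T^T} \lambda.
\end{equation*}
The matrix $\expect{T T^T}$ is always symmetric positive semi-definite, and under the assumption that it is non-singular it is in fact positive definite, so the map $\lambda \mapsto \Var(\hat{m^{'}})$ is a strictly convex quadratic on $\mathbb{R}^d$ with a unique global minimizer. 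Setting the gradient in $\lambda$ to zero yields the first-order condition $\expect{T T^T}\lambda = \expect{\hat{m} T}$, and inverting gives $\lambda^{*} = \expect{T T^T}^{-1} \expect{\hat{m} T}$, which is exactly \eqref{optimal_lambda}.

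Finally I would substitute $\lambda^{*}$ back into the quadratic. Using $\lambda^{*T}\expect{T T^T}\lambda^{*} = \expect{\hat{m} T}^T \expect{T T^T}^{-1} \expect{\hat{m} T}$ and $\lambda^{*T}\expect{\hat{m} T} = \expect{\hat{m} T}^T \expect{T T^T}^{-1} \expect{\hat{m} T}$, the two cross terms combine and give \eqref{optimal_multi_variance}. The step that is the most error-prone is this substitution (tracking the sign and the symmetry of the two cancellations); the deeper conceptual point, which I would emphasize, is that $\expect{\hat{m} T}^T \expect{T T^T}^{-1} \expect{\hat{m} T}$ is itself a non-negative quadratic form, so $\Var(\hat{m^{*}}) \leq \Var(\hat{m})$ always holds, generalizing the one-dimensional bound of Proposition \ref{prop1}. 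The main obstacle is not technical but notational: keeping the vector/matrix expectations and the scalar $\hat{m}$ aligned without implicitly assuming invertibility beyond what the hypothesis grants, which is precisely why the non-singularity of $\expect{T T^T}$ is the clean condition that avoids resorting to a pseudo-inverse when the $\hat{t}_i$ are linearly dependent.
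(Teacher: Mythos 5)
Your proposal is correct and follows essentially the same route as the paper's first proof of this proposition: unbiasedness by linearity of expectation, the variance written as a quadratic form in $\lambda$, the first-order condition under the non-singularity (hence positive-definiteness) assumption, and back-substitution to obtain the optimal variance. Your added observations — strict convexity guaranteeing uniqueness of the minimizer, and the non-negativity of $\mathbb{E}[\hat{m} T]^T \mathbb{E}[T\,T^T]^{-1}\mathbb{E}[\hat{m} T]$ giving $\Var(\hat{m^{*}}) \leq \Var(\hat{m})$ — are sound refinements of the same argument.
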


\begin{proof}
Refer to Appendix section \ref{proof6}
\end{proof}

Equations \eqref{optimal_lambda} and \eqref{optimal_multi_variance} are just generalization of the one dimensional case. The condition that $\mathbb{E}[ T\, T^T ]$ is non singular means that we have control variates that are independent in the sense that none of them is a linear combination of some of the other control variates. This condition is important as to use multi dimensional control variates we really need to find control variates that operate in another dimension space.

\section{Conclusion}
In this paper, we have revisited AC methods. We have shown that these methods can be interpreted as control variate estimators  of REINFORCE. We have also proved using the property of conditional expectation, that the Q and Advantage Actor Critic are optimal control variate estimators. We have invented a new method that combines optimally the general advantage function to establish a new Actor Critic method that is optimal from a control variate point of view. 

\clearpage
\bibliography{biblio}

\clearpage
\section{Appendix}

\subsection{Proof of proposition \ref{prop1}} \label{proof1}
We have 
$$
\mathbb{E}[\hat{m^{'}}] = \mathbb{E}[\hat{m}] - \alpha \mathbb{E}[\hat{t} - \tau]=  \mathbb{E}[\hat{m}] = \mu,
$$
since $\mathbb{E}[\hat{t} ] =\tau$. This proves that the control variate estimator is unbiased for any value of $\alpha$. The variance of this estimator is easy to compute and is given by:
$$
\Var(\hat{m^{'}}) = \Var(\hat{m}) - 2 \alpha \Cov(\hat{m},\hat{t}) + \alpha^2 \Var(\hat{t})  
$$
that is a second order parabola function of $\alpha$ that is minimum for 
$$
\alpha^{*} = \frac{\Cov(\hat{m}, \hat{t}\,)}{\Var(\hat{t})} = \frac{\sigma_{\hat{m}}}{\sigma_{\hat{t}}}  \rho_{\hat{m},\hat{t}}
$$
Its minimum value is given by 
$$
\Var(\hat{m^{*}})=(1- \rho_{\hat{m},\hat{t}}^2) \sigma_{\hat{m}}^2 \leq \sigma_{\hat{m}}^2
$$
that is closed to zero for  highly positively correlated  ($\rho_{\hat{m},\hat{t}} \approx 1$) or negatively  correlated ($\rho_{\hat{m},\hat{t}} \approx -1$) control variates. \qed

\subsection{Proof of proposition \ref{prop2}} \label{proof2}
Let us tackle one by one the various estimators given in proposition \ref{prop2}. 

The  \textbf{Q Actor Critic (Q-AC) }estimator is given by
$
\mathbb{E}_{\pi_{\theta}} \! \left[  \displaystyle \sum_{t=0}^{T-1} \nabla_\theta \log{\pi_\theta}(s,a) Q(s,a) \right]
$. It writes also as 
$$
\mathbb{E}_{\pi_{\theta}} \! \left[  \displaystyle \sum_{t=0}^{T-1} \nabla_\theta \log{\pi_\theta}(s,a) (\underbrace{R(s)}_{\hat{m}} - \underbrace{(R(s)-Q(s,a))}_{\hat{t}} \right]
$$
which shows that it is a control variate estimator (as explained in equation \eqref{control_variate}) provided we prove that 
\begin{equation}\label{Q-condition}
\mathbb{E}_{\pi_{\theta}} \! \left[  \displaystyle \sum_{t=0}^{T-1} \nabla_\theta \log{\pi_\theta}(s,a) (Q(s,a)-R(s)) \right] = 0
\end{equation}
The equation \eqref{Q-condition} is trivially verified as one of the definitions of the state action value function $Q(s,a)$ (often referred to as the 'Q' function) is the following:
$$
Q(s,a) = \mathbb{E}_{\pi_{\theta}}[ R(s) \mid s_{t} = s, a_{t}=a]
$$
The law of total expectation states that if $X$ is a random variable and $Y$ any random variable on the same probability space, then
$\mathbb{E}[ \mathbb{E}[ X \mid Y ]] =\mathbb{E}[ X]$ or in other words $\mathbb{E}[ \mathbb{E}[ X \mid Y ] - X ] =0$, which concludes the proof for the Q Actor Critic (Q-AC) method with $X= \displaystyle \sum_{t=0}^{T-1} \nabla_\theta \log{\pi_\theta}(s,a) R(s)$ and $Y=(s_{t} = s, a_{t}=a)$.\\

As for the \textbf{Advantage Actor Critic (A-AC)} method, the same reasoning shows that it suffices to prove that 
$$
\mathbb{E}_{\pi_{\theta}} \! \left[  \displaystyle \sum_{t=0}^{T-1} \nabla_\theta \log{\pi_\theta}(s,a) (A(s,a)-R(s)) \right] = 0
$$
to show that this is also a control variate method. Recall that the advantage function is given by $A(s,a) = Q(s,a) - V(s)$. Using the result previously proved for the Q AC method (equation \eqref{Q-condition}), it suffices to prove that 
\begin{equation}\label{V-condition}
\mathbb{E}_{\pi_{\theta}} \! \left[  \displaystyle \sum_{t=0}^{T-1} \nabla_\theta \log{\pi_\theta}(s,a) V(s) \right] = 0
\end{equation}
But this is a straight consequence of the more general proposition \ref{prop5} and its remark \ref{remark5}.

Finally, let us prove that the \textbf{TD Actor Critic (TD-AC)} method is also a control variate. Recall that the Temporal Difference term is given by 
$$
TD(s_{t})= r_{t+1} + \gamma V(s_{t+1}) - V(s_{t})
$$
Using equation \eqref{V-condition}, it suffices to prove that 
\begin{equation}\label{TD-condition}
\mathbb{E}_{\pi_{\theta}} \! \left[ \displaystyle  \sum_{t=0}^{T-1} \nabla_\theta \log{\pi_\theta}(s,a) ( r_{t+1} + \gamma V(s_{t+1})-R(s_t)) \right] = 0
\end{equation}
This is again a straightforward application of the law of total expectation  as
$$
V(s_{t+1}) =  \mathbb{E}_{\pi_{\theta}} [  \displaystyle  \sum_{s=t+2}^{T} \gamma^{s-(t+2)} r_{s} \mid s_{t+1} ]
$$
while 
$$
R(s_t) =  \displaystyle  \sum_{s=t+1}^{T} \gamma^{s-(t+1)} r_{s}
$$
Hence we can apply the law of total expectation with 
$
X =  \displaystyle  \sum_{t=0}^{T-1} \nabla_\theta \log{\pi_\theta}(s,a)  \displaystyle  \sum_{s=t+2}^{T} \gamma^{s-(t+1)} r_{s} 
$
and $Y=s_{t+1} $. This concludes the proof. \qed

\subsection{Proof of proposition \ref{prop3}} \label{proof3}
Let us first prove that for any $Y \in \mathcal{L}^2(\Omega, \mathcal{G}, \mathcal{P})$, we have 
$$
\mathbb{E}[ Y \dot (X-\mathbb{E}[X  \mid \mathcal{G}]] = 0
$$
This is straight application of the law of total expectation (also referred to as the law of iterated expectation or also the tower property) as follows
\begin{eqnarray*}
& & \mathbb{E}[ Y \cdot (X-\mathbb{E}[X  \mid \mathcal{G}])] \\
&=&  \mathbb{E}[   \mathbb{E}[ Y \cdot (X-\mathbb{E}[X  \mid \mathcal{G}] )\mid \mathcal{G}] ] \\
&=&  \mathbb{E}\Bigg[   Y \cdot  \Big( \underbrace{\mathbb{E}[X  \mid \mathcal{G}] - \mathbb{E}[ \mathbb{E}[X  \mid \mathcal{G}]\mid \mathcal{G}] }_{=0}\Big) \Bigg]  \\
& =& 0
\end{eqnarray*}
which proves the orthogonality. \\

For $Y \in \mathcal{L}^2(\Omega, \mathcal{G}, \mathcal{P})$, we have
\begin{eqnarray*}
& & \mathbb{E}[ (X - Y )^2 ] \\
& = & \mathbb{E}[ (X -\mathbb{E}[X  \mid \mathcal{G}] + \mathbb{E}[X  \mid \mathcal{G}] - Y )^2 ] \\
& = & \mathbb{E}[ (X -\mathbb{E}[X  \mid \mathcal{G}]) ^2  + (\mathbb{E}[X  \mid \mathcal{G}] - Y )^2 ]   \\
& & + 2 \underbrace{\mathbb{E}[ (X -\mathbb{E}[X  \mid \mathcal{G}]) (\mathbb{E}[X  \mid \mathcal{G}] - Y )]}_{=0} \\
& \geq  & \mathbb{E}[ (X -\mathbb{E}[X  \mid \mathcal{G}]) ^2  ] 
\end{eqnarray*}
which concludes the proof that the squared $L^2$ norm of $X - Y$ for any element $Y$ of $\mathcal{L}^2(\Omega, \mathcal{G}, \mathcal{P})$ is lower bounded by the squared $L^2$ norm of $X -\mathbb{E}[X  \mid \mathcal{G}]$ \qed

\subsection{Proof of proposition \ref{prop4}} \label{proof4}
As explained in the proposition \ref{prop1}, the variance of the control variate is due to the residuals computed as the initial estimator minus the control variate. The control variate is the orthogonal projection of the initial estimator on the linear subspace spanned by the control variate. The proposition \ref{prop3} shows that within the possible sub $\sigma$ the conditional expectation is the orthogonal projection and is the best estimator. Recall that the Temporal difference writes as 
\begin{equation}\label{proof4:TD}
r_{t+1} + \gamma V(s_{t+1}) - V(s_{t})
\end{equation}
Recall also that the value function is a conditional expectation
\begin{equation}\label{proof4:Value}
V(s_{t+1} ) = \mathbb{E}[R_{t+1} \mid s_{t+1}]
\end{equation}
while the state action value 'Q' function is also a conditional expectation but with respect to $ s_{t}, a_{t}$:
\begin{equation}\label{proof4:QFunction}
Q(a_{t}, s_{t} ) = \mathbb{E}[R_{t} \mid s_{t}, a_{t}]
\end{equation}

Last but not least, Advantage function writes as
\begin{equation}\label{proof4:Advantage}
A(a_{t}, s_{t} ) = Q(a_{t}, s_{t}) - V(s_{t})
\end{equation}

From equations \eqref{proof4:TD}, \eqref{proof4:QFunction} and \eqref{proof4:Advantage}, we can deduce that A AC has lower variance than TD AC as the corresponding sub $\sigma$-algebra obtained by conditioning with respect to $ s_{t}, a_{t}$, used for A-AC, contains the one obtained by conditioning with respect to  $s_{t+1}$ and used for TD-AC.

As 'Q' function is a conditional expectation of the future discounted rewards, Q-AC should perform better (have a lower variance) than just REINFORCE.

\subsection{Proof of proposition \ref{prop5}} \label{proof5}
\begin{proof}: Recall that the logarithmic gradient writes as
$$
 \nabla_\theta \log{\pi_\theta}(s,a)  = \frac{\nabla_\theta \pi_\theta(s,a)}{\pi_\theta(s,a)}
$$

Assuming smooth functions such that we can interchange expectation (integration) and gradient (derivation), we have the following:
\begin{eqnarray*}
& & \mathbb{E}_{\pi_{\theta}} \! \left[  \displaystyle \sum_{t=0}^{T-1} \nabla_\theta \log{\pi_\theta}(s,a) \Phi(s,a) \right] \hspace{1.5cm} \\
&=&  \displaystyle \sum_{t=0}^{T-1}  \mathbb{E}_{\pi_{\theta}}\left[  \frac{\nabla_\theta \pi_\theta(s,a)}{\pi_\theta(s,a)} \Phi(s,a)  \right] \\
& =& \displaystyle \sum_{t=0}^{T-1}   \int  \frac{\nabla_\theta \pi_\theta(s,a)}{\pi_\theta(s,a)} \Phi(s,a)  \pi_\theta(s,a) d\theta \\
& =& \displaystyle \sum_{t=0}^{T-1}   \int  \nabla_\theta \pi_\theta(s,a) \Phi(s,a)  d\theta \\
&= &\displaystyle \sum_{t=0}^{T-1}   \nabla_\theta  \int  \pi_\theta(s,a) \Phi(s,a)  d\theta \\
&=& 0
\end{eqnarray*}
which concludes the proof\end{proof}

\subsection{Proof of proposition \ref{prop6}} \label{proof6}
We can provide multiple proofs and interpretation of this result. 

\textbf{proof 1}
Let us use traditional variation calculus. 
It is immediate that for any value  $\lambda \in \mathbf{R}^d$ it is unbiased as the additional term has a null expectation: $\mathbb{E}[ \lambda^T T ] = 0$. We can compute  the variance of the control variate estimator $\widetilde{m}(\lambda)  = \hat{m} - \lambda^T T$, given by
\begin{equation}
\Var(  \widetilde{m}(\lambda) )  = \Var(\hat{m}) - 2 \lambda^T  \mathbb{E}[\hat{m} T] + \lambda^T   \mathbb{E}[T \,T^T]  \lambda
\end{equation}
Assuming the covariance matrix $ \mathbb{E}[T \,T^T]$ is non singular, our minimum variance problem lies in finding the minimum of a defined parabolla, hence its minimum is given by first order optimality (see \cite{Boyd_2004} A.13)
\begin{equation}
\lambda^* =    \mathbb{E}[T \,T^T]^{-1}  \mathbb{E}[\hat{m} T]
\end{equation}
with the minimum given by:
\begin{equation*}
\Var( \widetilde{m}(\lambda^*) ) =  \Var( \hat{m}) - \mathbb{E}[\hat{m} T]^T \mathbb{E}[T \,T^T]^{-1} \mathbb{E}[\hat{m} T]
\end{equation*}\\

\textbf{proof 2}
Another way to demonstrate this result is to look at the $L^2$ space of all square integrable random variables defined on the same probability space 
as $\hat{m}$ equiped with the canonical inner product $<\!X_1,X_2\!> \!= \!\mathbb{E}[ X_1 X_2] $ and the implied Hilbertian norm 
$\| X \| = \mathbb{E}[ X^2]^{1/2}$ for any $X, X_1, X_2 \in L^{2}$. 
Let $\mathcal{G}$ be the linear compact and closed space subspace 
spanned by any linear combination of $C$: $W \in L^{2}$ such that $W= \mu^T T$
 for some $\mu \in\mathbb{R}^d$. 
Using the fact that the expectation of $\widetilde{m}(\lambda)$ does not depend on $\lambda$, the variance minimum in $\mathbb{R}^d$ can be cast as a minimum distance problem of $\hat{m}$ with $\mathcal{G}$. This is because
\begin{eqnarray*}
&  & \hspace{-0.7cm} \arg \min_{\lambda \in \mathbb{R}^d} \Var( \widetilde{m}(\lambda))  \\
&=& \arg  \min_{\lambda \in \mathbb{R}^d} \mathbb{E}[ \widetilde{m}(\lambda)^2] -   \left( \mathbb{E}[ \widetilde{m}(\lambda) ] \right)^2 \hspace{2.5cm} \\
&=& \arg \min_{\lambda \in \mathbb{R}^d} \mathbb{E}[ \widetilde{m}(\lambda)^2] \\
&=& \arg \min_{\lambda \in \mathbb{R}^d} \| \hat{m} - \lambda^T T \| \\
&=& \arg \min_{g \in \mathcal{G}} \| \hat{m} - g \|
\end{eqnarray*}
This is nice property as the minimum distance problem can be solved with the Hilbert space projection theorem that states that the closest point of $\mathcal{G}$ to $\hat{m}$ is given by its orthogonal projection $g^* \in \mathcal{G}$. This orthogonal projection $g^* = (\lambda^*)^T T$ is characterized uniquely by
$$
< \hat{m} -  g^*, g > = 0 \quad \text{for any } \,\, g  \in \mathcal{G} 
$$
which writes as $\mathbb{E}[ \hat{m} g ] -  \mathbb{E}[ g^*g ] = 0 $. Hence for any $\lambda \in \mathbb{R}^d$, we have
$\mathbb{E}[\hat{m}\, T^T]\lambda = (\lambda^*)^T\mathbb{E}[T \,T^T]\lambda$, which implies
$\mathbb{E}[\hat{m}\, T^T] = (\lambda^*)^T\mathbb{E}[T \,T^T]$
or equivalently 
$$
 (\lambda^*)^T   = \mathbb{E}[\hat{m}\, T^T] \mathbb{E}[T \,T^T] ^{-1} 
$$ leading to the following solution:
\begin{eqnarray*}
 \lambda^* = \mathbb{E}[T \,T^T] ^{-1} \mathbb{E}[\hat{m} T] 
\end{eqnarray*}

\textbf{proof 3}
Let us show how we can decouple the problem into $d$ one dimensional control variate problems using diagonalization. We are looking for the minimum variance control variate spanned by our initial control variates basis $(\hat{t}_1 - \mathbb{E}[\hat{t}_1], \ldots, \hat{t}_d - \mathbb{E}[\hat{t}_d])^T$. Obviously, we can use any equivalent basis. In particular, if we use Gram Schmidt orthogonalization, we can ensure that the $d$ components of our control variate basis are orthogonal for the implied $L^2$ inner product. If some of the component are non independent we can retrieve the corresponding control variate vectors to ensure they are all independent. The orthogonality of the $d$ components of our control variate basis ensures that the covariance matrix $\mathbb{E}[T \,T^T]$ is symmetric and non negative definitive. Hence, because of the Takagi's factorization, there exists a diagonal matrix $D$ with non negative diagonal terms and $U$ an unitary matrix $U U^T = I$ such that $\mathbb{E}[T \,T^T] = U^T D U$. Let define $W=U,T$. We have  $\mathbb{E}[W \,W^T] = D$, so that the components of $W$ are orthogonal with variance given by diagonal terms $W_{ii} = D_{ii} > 0$ for $i=1,\ldots,d$. Because of the equivalence of basis, our optimization problem decouples as it can be reformulated as follows:
\begin{eqnarray*}
\text{minimize}_{\gamma \in \mathbb{R}^d} \Var( \hat{m} - \gamma^T W) 
\end{eqnarray*}
which is equivalent to $\text{minimize}_{\gamma_i \in \mathbb{R}} \Var( \hat{m}_{ii} - \gamma W_{ii})$ for any $i=1, \ldots,d$, where $\hat{m}_{ii}$ is the $i$ coordinate of the random variable $\hat{m} $ in the basis implied by $W$ that is spanned by the orthorgonal vectors $W_i$ that is fill with zero except for coordinate $i$ equal to $W_{ii}$. These decoupled and independent optimization problems are now just one dimensional control variate optimization problem whose solution are 
$$
\gamma_i^* = \frac{\mathbb{E}(\hat{m}_{ii}, W_{ii})}{\mathbb{E}(W_{ii}^2)} =  \frac{<\hat{m},W_i>}{<W_i,W_i>}
$$ so that the solution is given by
$$
 (\gamma^*)^T W = \sum_{i=1,\ldots,d} \gamma_i^* W_{i} = \sum_{i=1,\ldots,d} \frac{<\hat{m},V_i>}{<W_i,W_i>} V_{i}
$$
Noticing that the term $\frac{<\hat{m},V_i>}{<V_i,V_i>}$ is the $i^{th}$ term of the $  \mathbb{E}[\hat{m}\, T^T] \mathbb{E}[T \,T^T] ^{-1} $, this can be rewritten as 
$$
(\gamma^*)^T = \mathbb{E}[\hat{m}\, T^T] \mathbb{E}[T \,T^T] ^{-1}
$$ leading to the following solution:
\begin{eqnarray*}
 \lambda^* = \mathbb{E}[T \,T^T] ^{-1} \mathbb{E}[\hat{m} T] 
\end{eqnarray*}
which concludes the third proof. \qed
\end{document}